
\documentclass[letterpaper, 10 pt, conference]{ieeeconf}  

\IEEEoverridecommandlockouts                              

\overrideIEEEmargins                                      



\usepackage[pdftex]{graphicx}
\usepackage[inkscapeformat=png]{svg}
\usepackage{amsmath}
\usepackage{amsfonts}

\usepackage{amsthm}
\usepackage{xfrac}
\usepackage{hyperref}
\usepackage[english]{babel}
\usepackage{mathtools}
\usepackage{algorithmic}
\usepackage[caption=false, font=footnotesize]{subfig}
\usepackage{mwe}
\usepackage{cite}
\usepackage{bm}
\usepackage{wrapfig}


\newcommand{\ignore}[1]{}

 \def\U{\mathcal{U}}

  \def\D{\mathcal{D}}
  
\def\X{\mathcal{X}}

\def\dR{\mathbb{R}}


\def\eps{\varepsilon}

\theoremstyle{definition}

\newtheorem{problem}{Problem}
\theoremstyle{theorem}
\newtheorem{lemma}{Lemma}

\newtheorem{theorem}{Theorem}




\def\niceparagraph#1{\vspace{5pt} \noindent \textbf{#1}}


\title{\LARGE \bf
Impossibility of Self-Organized Aggregation without Computation
}

\author{Roy Steinberg$^{*}$ and Kiril Solovey$^{*}$
\thanks{$^{*}$ Electrical \& Computer Engineering, Technion, Haifa, Israel.
        {\tt\small \{roysteinberg@campus.,kirilsol@\}technion.ac.il}} %
}

\begin{document}

\maketitle
\thispagestyle{empty}
\pagestyle{empty}

\begin{abstract}
 In their seminal work, Gauci et al. (2014) studied the fundamental task of aggregation, wherein multiple robots need to gather without an a priori agreed-upon meeting location, using minimal hardware. That paper considered differential-drive robots that are memoryless and unable to compute. Moreover, the robots cannot communicate with one another and are only equipped with a simple sensor that determines whether another robot is directly in front of them. Despite those severe limitations, Gauci et al.\ introduced a controller and proved mathematically that it aggregates a system of two robots for any initial state. Unfortunately, for larger systems, the same controller aggregates empirically in many cases but not all. Thus, the question of whether a controller exists that aggregates for any number of robots remains open. In this paper, we show that no such controller exists by investigating the geometric structure of controllers. In addition, we disprove the aggregation proof of the paper above for two robots and present an alternative controller alongside a simple and rigorous aggregation proof. 
\end{abstract}


\section{Introduction}\label{sec:intro}
Diverse species, from ants to birds, fish, and mammals, exhibit swarm intelligence, wherein relatively simple mechanisms deployed by individuals lead to an emergent collective behavior. Taking inspiration from the natural world, research on swarm robotics seeks to develop low-cost robots that can collectively execute complex tasks only by relying on simple and local control laws and without explicit communication. This includes collective decision-making between robots and animals~\cite{halloy2007social}, robots building complex structures~\cite{werfel2014designing}, and particle assembly via global control~\cite{Blumenberg0B23}.

A fundamental emergent behavior is aggregation (also known as gathering), where the robots must reach sufficiently close to one another without a priori agreeing upon a predefined meeting location.  In recent years, various approaches have been introduced towards swarm aggregation, from genetic programming ~\cite{trianni2003evolving}, to settings with limited sensing abilities~\cite{barel2021probabilistic,dovrat2017gathering}, and algorithms relying on potential functions
~\cite{gazi2005swarm,cohen2005convergence}. Although some approaches guarantee aggregation, they usually require complex computational abilities and memory.

The distributed computation community has also considered aggregation (see, e.g.,~\cite{oasa1997robust,cieliebak2012distributed,flocchini2019distributed}) where solutions that guarantee aggregation for any number of robots have been developed by relying on geometric control laws (e.g., moving towards the center of gravity of the robots). However, the proposed methods rely on strong assumptions 
such as knowledge of the locations of all the robots and the absence of kinematic motion or collision avoidance constraints, making them unsuitable in practice. 

Aggregation under a far less restrictive set of assumptions has been studied in~\cite{gauci2014self}, where the authors considered a memoryless controller for a group of homogeneous differential-drive (DD) robots. Here, the robots can obtain information about other robots through a binary sensor indicating whether another robot is in front of them. 
Considering those constraints, the controller of a given robot is \emph{bimodal}, as it specifies a robot's action depending on whether there is another robot in front of it. In particular, a bimodal controller is a four-dimensional vector specifying the speed of each of the two robot wheels for every mode.

To find an aggregating bimodal controller, the authors of~\cite{gauci2014self} empirically obtained the controller $u_{prev}:=(-0.7,-1,1,-1)$ through an exhaustive search of the four-dimensional control space. This controller yields a backward circular motion of the robot when no other robot is in sight, and a rotation on the spot otherwise. The paper provided a theoretical proof for aggregating two robots using the controller.
However, no proof has been shown for a larger robot number $n >2$. Instead, simulations and hardware experiments using 'e-puck' DD robots~\cite{mondada2009puck} have demonstrated that aggregation occurs within the allotted time budget in most cases.

The idea  that  $u_{prev}$ leads to aggregation for any $n\geq 2$ has recently been challenged. The paper~\cite{daymude2021deadlock} has shown a counterexample consisting of initial robot locations in which no aggregation would occur for any $n\geq 4$. In particular, by placing pairs
of robots facing directly away from each other, deadlocking can occur (see Fig.~\ref{fig:deadlocking}), as each robot attempts to move backward but is blocked by its partner. This causes each pair to stagnate. Interestingly, it is demonstrated empirically in~\cite{daymude2021deadlock} that some levels of noise (with respect to sensor and motion models) can help in escaping deadlocks for $u_{prev}$, but too much noise can be detrimental for aggregation. Unfortunately, no theoretical analysis is provided. The question posed by the authors of~\cite{gauci2014self} whether there exists an aggregating controller for any $n\geq 2$ remains open.


\niceparagraph{Contribution.} In this paper, we answer the above question in the negative for a general number of robots, alongside strengthening the understanding of the case of two robots. Our contributions are as follows. 
\textbf{(i)} We identify an implicit (and unreasonable) assumption made in the aggregation proof for two robots using  $u_{prev}$ in~\cite{gauci2014self}, which deems the proof incomplete. 
\textbf{(ii)} We present an alternative controller alongside a simple and rigorous aggregation proof for two robots. 
\textbf{(iii)} We prove that no bimodal controller can achieve aggregation for \emph{all} $n>2$. 
\textbf{(iv)} We mathematically prove that for some controller types, initial robot states that lead to nonaggregation do not have to be singular. That is, we show that when the initial states are chosen at random, the probability of nonaggregation is strictly larger than zero.
\textbf{(v)} Finally, we empirically demonstrate that our $2$-robot controller outperforms the controller $u_{prev}$, and test the effect of noise and slippage on nonaggregation.





\section{Preliminaries and Problem Definition} \label{sec:pre}
We consider a system of homogeneous disk-shaped DD robots operating in an obstacle-free environment. The left and right wheel speeds of a given robot are individually controlled and denoted by $v_l$ and $v_r$, respectively. A robot's state space is $\dR^2\times S^1$, where a state consists of the $(x,y)$ coordinates of the robot's center, and its orientation $\theta$. The kinematic constraints of a robot are given as the ODE 
    $\left(\begin{smallmatrix}
        \dot{x}\\ \dot{y} \\ \dot{\theta}
    \end{smallmatrix}\right)
    = 
    \left(\begin{smallmatrix}
        \cos \theta & 0 \\ \sin \theta & 0 \\ 0 & 1
    \end{smallmatrix}\right)
    \left(\begin{smallmatrix}
        v \\ \omega
    \end{smallmatrix}\right),$
where $v = \frac{v_l + v_r}{2}$ is the tangential speed,  $\omega = \frac{v_r - v_l}{d_{iw}}$ is the angular velocity, with inter-wheel diameter~$d_{iw}>0$~\cite{klancar2017wheeled}.\footnote{In our calculation, we use the dimensions of the e-puck robot~\cite{mondada2009puck}. The robots are disc-shaped of radius $r = 3.7[\text{cm}]$,  inter-wheel diameter $d_{iw} = 5.1[\text{cm}]$, and 
maximum wheel speed $v_i = 12.8~\left[\frac{\text{cm}}{\text{s}}\right]$ each.}

Each robot is equipped with a binary sensor, which determines whether there is another robot along the infinite ray emanating from the ego robot's center, with orientation $\theta$.
If there is no robot in sight, the output is '$0$', and '$1$' otherwise. 

Before proceeding to the problem statement, we state four assumptions.  (1) Following~\cite{gauci2014self}, the robots are memoryless and cannot communicate. Next, we introduce several assumptions to make the analysis tractable.  (2) The robots cannot push each other upon collision (i.e., all collisions are purely plastic, and no momentum is conserved).  (3) No noise in the motion model or sensor measurements is present.  (4) There is no tire slippage.

Considering  that the robots cannot communicate or compute, a  controller is  of the form $\left(v_{l,0}, v_{r,0}, v_{l,1}, v_{r,1}\right)$, where $v_{l}$ and $v_{r}$ are the controls for the left and right wheels respectively, with the subscripts $1$ and $0$ indicating that another robot is in the line of sight (LoS)  or not, respectively. The controls are normalized by dividing the speed of each wheel by the maximum speed. 
We denote the space of all bimodal controllers by $\U$. Note that as the robots are homogeneous, they all use the same bimodal controller. 

We are interested in designing bimodal controllers that lead to aggregation, which is defined as follows. Given that the state space of an individual robot is $\X:=\dR^2\times S^1$, the state space of the multi-robot system can be described as $\X^n:=\X\times \cdot \times \X$, where $n\geq 2$ is the number of robots. For a given multi-robot (MR) state $x\in \X^n$ we use $x_i\in \X$ to denote the $i$th robot state, where $1\leq i\leq n$. We denote by $\X^n_f\subset \X^n$ the free space, i.e., for any $x\in \X^n_f$ and any two robots $i\neq j$ it holds that $\|x_i-x_j\|\geq 2r$.  

We denote by $\X^n_a\subset\X^n_f$ the set of aggregated MR states. In particular, a MR state $x\in \X^n$ is in $\X^n_a$ if and only if the set $\bigcup_{i=1}^n \D_{r+\rho}(x_i)$ is \emph{connected} in $\dR^2$, where $\D_{r+\rho}(x_i)$ is a $(r+\rho)$-disc centered at $x_i$ for some user-defined value $\rho\ge0$.\footnote{For simplicity, in our proofs we set the padding parameter  zero, but they can be generalized to any value of $\rho\geq 0$.} Our definition of aggregation is weaker than the one considered in~\cite{daymude2021deadlock}, where the discs are required to form a \emph{compact} set. As a result, our impossibility proofs are more general, as they also apply for the compact setting. 

Denote by $\pi_{x_0,u}:\dR_{\geq 0}\rightarrow \X^n$ the trajectory describing the motion of the robots over time from an initial state $x_0\in \X_f^n$ (i.e., $\pi_{x_0,u}(0)=x_0$) for a given (bimodal) controller $u\in \U$. We are ready to define our problem, which is designing a bimodal controller that leads to aggregation. 


\begin{problem}\label{problem:main}
    Find a bimodal controller $u\in \U$ such that for any number of robots $n\geq 2$ and initial MR state $x_0\in \X^n_f$ the system aggregates. That is, for every $x_0\in \X^n_f$ there exists $t\in [0,\infty)$ such that $\pi_{x_0,u}(t)\in \X_a^n$. 
\end{problem}



\section{Aggregation for two robots} \label{sec:n=2}
In this section, we consider the most simple setting of two robots ($n=2$). We first point out an issue in the proof a previous work~\cite{gauci2014self} which claimed to have found an aggregating controller for this case. Then, we provide an alternative controller and formally prove that it aggregates. 

\subsection{Issue in a previous proof and a possible remedy}
A proof was provided for the aggregation of two robots using the aforementioned $u_{prev}=(-0.7,-1,1,-1)$ in~\cite{gauci2014self}. The motion of the DD robots using this controller is divided into two parts. For every robot $i$, when no other robot is in its LoS a backwards, clockwise circular motion is followed. The center of this rotation is referred to as the instantaneous center of rotation (ICR), and the radius of the circle is denoted as $R$. If another robot is in the LoS, the ego robot  rotates clockwise on the spot. 

The proof proceeds as follows. When robot $j$ enters the LoS of robot $i$, it starts rotating on the spot, and as its ICR is located at a fixed distance and angle in relation to $i$, it moves closer towards robot $j$. While the distance between the ICR of the robots during their motion could not be solved analytically, an expression was found that, when solved numerically, led to a solution depending on the initial distance between the two robots and the angle of robot $j$. 

An implicit assumption is made that if at time $t$ the distance between the two robots satisfies $d \le 2(R+r)$, then aggregation is assured at time $t'>t$. This assumption is sound for a scenario in which the two robots can see one another at some point in time, and begin their aggregation.  However, the two robots can be positioned such that this distance is satisfied, yet the robots do not aggregate (see Fig.~\ref{fig:2 robots circle}).  

\begin{figure}
    \centering
    \frame{\includegraphics[width=0.53\linewidth]{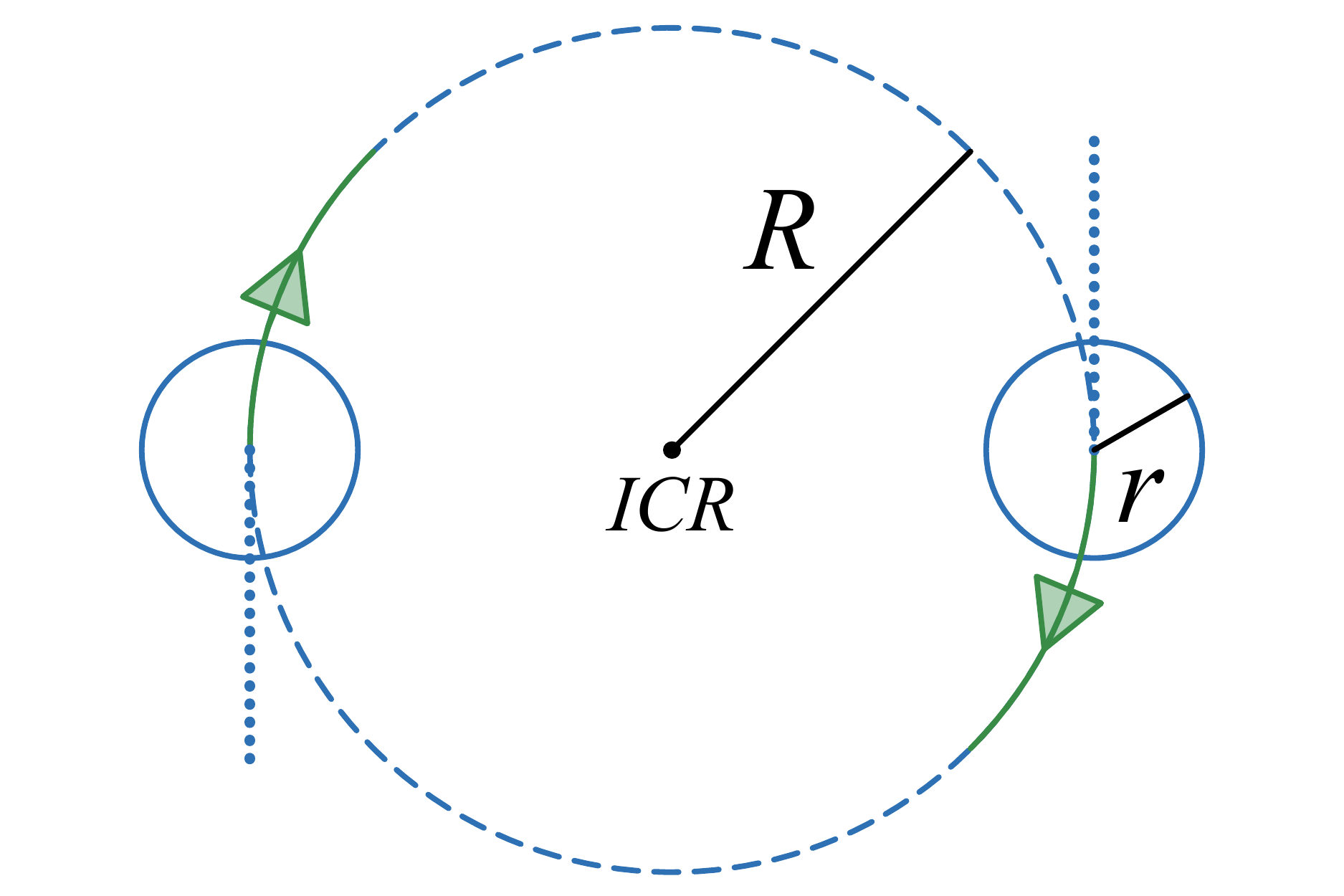}}
    \caption{An example of a no-sensing scenario with two moving robots which violates an assumption made in the aggregation proof for two robots~\cite{gauci2014self}. Both robots move along the dashed circle of radius $R$ centered around the ICR with the same speed and never see one another or aggregate.}
    \label{fig:2 robots circle}
\end{figure}


To circumvent this issue, we suggest to revise $u_{prev}$ into a controller of the form $(-a, -b,1,-1)$ for $b>a>0$ such that the resulting radius of circular motion $\tilde{R}$ is small enough that the two robots cannot simultaneously inhabit the same perimeter of circle with radius $\tilde{R}$. In other words, if $\tilde{R}\leq r$ then both robots will eventually see one another and thus ultimately aggregate. For instance, consider the controller $\tilde{u}_{prev} := (-0.18, -1, 1, -1)$, which yields the radius $\tilde{R} = 3.67~[cm]$. Unfortunately, we do not have a formal proof to show that the revised controller $\tilde{u}_{prev}$ aggregates. We did evaluate this claim experimentally where $\tilde{u}_{prev}$ aggregated for all initial states (see Sec.~\ref{sec:sim}), in contrast to $u_{prev}$ which fails in a significant number of test. 




\subsection{An alternative provably-aggregating controller}
We introduce an alternative bimodal controller $u^*$ and formally prove that it aggregates. The controller $u^*$ rotates the robot on the spot in a clockwise manner while no robot  is not in the LoS and moves straight forward otherwise. 

\begin{lemma}
    The bimodal controller $u^*=(-a,a,b,b)$, where $a,b \in (0,1]$ is aggregating for $n=2$. \label{lem:2robots}
\end{lemma}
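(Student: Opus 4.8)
The plan is to read off the two behaviors encoded by $u^*=(-a,a,b,b)$ and then track the single scalar $d(t):=\|x_1(t)-x_2(t)\|$ as a Lyapunov-type quantity. In mode $0$ (nothing in the LoS) we have $v=\tfrac{-a+a}{2}=0$ and $\omega=\tfrac{2a}{d_{iw}}\neq 0$, so a robot spins in place, its sensing ray sweeping a full circle in time $T:=\pi d_{iw}/a$; in mode $1$ (the other robot in the LoS) we have $\omega=0$ and $v=b>0$, so a robot drives straight forward. Both behaviors are nondegenerate since $a,b>0$. The first ingredient is a geometric observation tying sensing to distance decrease: a robot senses the other exactly when its forward ray meets the other's radius-$r$ disc, and since the centers are at distance $d\ge 2r$ in free space, this forces the heading to lie within $\arcsin(r/d)\le\arcsin(1/2)=\pi/6$ of the exact bearing to the other robot. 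Computing $\dot d=\widehat{(x_1-x_2)}\cdot(\dot x_1-\dot x_2)$ then shows that each robot in mode $1$ is driving toward the other and contributes at most $-b\cos(\pi/6)=-\tfrac{\sqrt3}{2}b$ to $\dot d$, while a spinning robot contributes $0$. Hence $d(t)$ is non-increasing, and $\dot d\le-\tfrac{\sqrt3}{2}b$ whenever at least one robot is in mode $1$.

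Integrating this inequality is the crux. Because $d(t)\ge 2r$ always, the estimate $2r\le d(0)-\tfrac{\sqrt3}{2}b\,m_1(t)$, with $m_1(t)$ the time spent with some robot in mode $1$ up to time $t$, shows that the \emph{total} mode-$1$ time is finite. Since a robot moves only in mode $1$ and at speed $b$, each robot travels a finite total length, so both positions converge, say $x_i(t)\to x_i^\infty$ with $\|x_1^\infty-x_2^\infty\|=d^\ast\ge 2r$.

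To finish, assume for contradiction that the system never aggregates, i.e.\ $d(t)>2r$ for all $t$. For all large $t$ each robot lies within any prescribed $\epsilon$ of its limit and the remaining mode-$1$ budget is below $\epsilon$, so one robot, say robot $2$, is essentially static over the next interval of length $T$. Meanwhile robot $1$, spinning, must within time $T$ rotate its ray across robot $2$'s disc, which subtends the positive angle $2\arcsin(r/d^\ast)$, and enter mode $1$; it then drives along a fixed line passing within $r+\epsilon$ of the nearly fixed center $x_2$. A direct check shows that along such a line the bearing offset to $x_2$ never exceeds the (growing) view half-angle $\arcsin(r/\cdot)$, so robot $1$ keeps robot $2$ in sight until the centers reach distance $2r$; the robots touch and aggregate in finite time, contradicting $d(t)>2r$. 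Therefore aggregation occurs.

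I expect the main obstacle to be precisely this last step: controlling the fully coupled phase in which both robots move, where a priori they could each head slightly off-axis, lose sight of one another, and ``miss.'' The device that removes this difficulty is the finiteness of the total mode-$1$ time established in the second step, which forces the dynamics into an arbitrarily-near-static regime where the clean single-mover analysis (spin until detection, then a straight collision course that provably maintains line of sight) applies and yields genuine finite-time contact.
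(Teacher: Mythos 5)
Your overall strategy is genuinely different from the paper's and much of it is sound: the identification of the two modes, the observation that a robot in mode $1$ has heading within $\arcsin(r/d)\le\pi/6$ of the bearing so that $\dot d\le -\tfrac{\sqrt3}{2}b$ whenever someone is in mode $1$, the deduction that the total mode-$1$ time (hence each robot's total path length) is finite so positions converge, and the sweep argument showing a spinning robot must detect the other within one revolution. The paper instead argues directly by cases (both see each other / one sees the other / neither sees the other), and its key step is the ``both see each other'' case, resolved by noting that \emph{each} robot's ray intersects the \emph{other's} disc, so each robot's perpendicular offset from the other's line of motion starts and ends in $[-r,r]$ and hence stays there, giving maintained visibility and a collision by an intermediate-value argument on the coordinates along the line.

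The gap is in your final step, and it is exactly the coupled-phase difficulty you flag. Visibility of robot $2$ from robot $1$ requires the perpendicular distance from robot $2$'s \emph{actual} center to robot $1$'s ray to be at most $r$ --- not $r+\epsilon$. When a spinning robot first detects the other, the detection is tangential: that perpendicular distance equals $r$ exactly, with zero margin. Your near-static reduction only guarantees that robot $2$ subsequently moves by at most $b\epsilon$, so the perpendicular distance is only bounded by $r+O(\epsilon)$; for any $\epsilon>0$ this permits loss of line of sight immediately after detection, after which robot $1$ stops having made arbitrarily little progress. Since the finite mode-$1$ budget is consistent with infinitely many vanishingly short detection episodes, no contradiction follows, so the ``direct check'' as stated does not close the argument. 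The fact that saves the situation is that robot $2$ moves \emph{only when its own ray intersects robot $1$'s disc}, whose center lies on robot $1$'s line; an affine-interpolation argument then shows robot $2$'s perpendicular offset from that line cannot leave $[-r,r]$. That is precisely the paper's Case~1 argument --- once you add it, your reduction to the near-static regime becomes unnecessary, which suggests the cleaner route is the paper's direct case analysis rather than the Lyapunov/limit argument.
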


\begin{proof}
    We apply the controller $u^*$ where both robots are at an arbitrary initial MR state $x_0\in \X^2_f$ at time $t=0$. 
    We partition the proof into three cases. 
    
    (\emph{C1}): If both robots see one another at time $t$ then they move in a straight line along their LoS. We prove that if this occurs, the robots will eventually aggregate while maintaining a LoS throughout the motion. Without loss of generality, we assume that at time $t$ the heading of robot $0$ is the x-axis $\theta_0(t)=0$, and its position is $(x_0(t),y_0(t))=(0,0)$. As robot $1$ is in the LoS of robot $0$, its $y$ coordinate $y_1(t)$ must be in the range $[-r,r]$. Also denote $(x,y):=(x_1(t),y_1(t))$. Denote by $v_0,v_1\in \dR^2$ the rays corresponding to the headings of robot $0$ and $1$, respectively, at time $t$, which begin at the robots' locations at time $t$ (notice that the orientation or origin of the rays is fixed with respect to time $t$). Denote by $v_i(\tilde{t})$ the location along $v_i$ robot $i$ would reach in time $\tilde{t}\geq t$ had it been moving forward along it from time $t$ (see Fig.~\ref{fig:bimodal proof}).
    
    Denote by $(\ell,0):=v_0(t')$ the first intersection point of $v_0$ with the boundary of robot $1$ for some time $t'\geq t$. Similarly, denote by $(x'',y''):=v_1(t'')$ the first intersection point of $v_1$ with the boundary of robot $0$, and notice that $y''\in [-r,r]$. Without loss of generality, assume that $t'\leq t''$ and denote $(x',y'):=v_1(t')$ (otherwise, we switch the roles between the two robots and transform the system accordingly). Next, we show that if both robots follow $v_0$ and $v_1$ from time $t$ they will maintain visibility until colliding. Observe that robot $1$ will remain visible to robot $0$ since $y,y''\in [-r,r]$. From symmetry, robot $0$ will remain visible for robot~$1$. 
    
    It remains to show that robots $0$ and $1$ will eventually collide with one another as they proceed along $v_0$ and $v_1$, respectively. For the proof below, we allow the robots to overlap and pass each other as they move along $v_0$ and $v_1$, which implies aggregation at an earlier time. As robot $1$ reaches $(x',y')$ before $(x'',y'')$, we have that $y'\in [-r,r]$ as well. Next, notice that if $x'> \ell$ then the two robots must be in collision in time $t$ as $(x',y')\in \D_r(x,y)$, which implies that $\|(\ell,0)\|=\|(x',y')-(x,y)\|<r$.  Thus, $x'\leq  \ell$. Since the $x$ coordinate of robot $0$ monotonically increases between time $t$ and $t'$ along $v_0$ from the values $0$ to $\ell$, and the $x$ coordinate of robot $1$ monotonically decreases along $v_1$ between time $t$ and $t'$ from the values $x$ to $x'$, where $0\leq x'$ and $x'\leq  \ell$, there must be a time $t^*$, where $t\leq t^*\leq t'$, such that $x$ coordinates of both robots are the same. As their $y$ coordinates cannot be more than $r$ apart, the robots are in collision. This implies that at an earlier time than $t^*$, the robots aggregated.

    \begin{figure}
    \centering
    \includegraphics[width=1\linewidth]{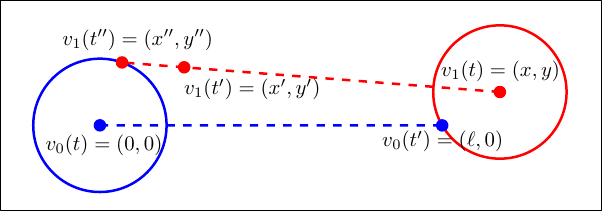}
    \caption{Illustration for Case 1 in the Lemma~\ref{lem:2robots} proof. Robot $0$ and $1$ are in blue and red, respectively, with corresponding rays $v_0,v_1$ as dashed lines. 
    \label{fig:bimodal proof}}
    \vspace{-7pt}
\end{figure}
    
    (\emph{C2}) Now suppose that only one of the robots (without loss of generality, robot $0$) sees the other (robot $1$) at time $t$. Thus, robot $0$ starts moving in a straight line while robot $1$ rotates on the spot. If robot $0$ reaches robot $1$ before the latter sees him, then aggregation occurs. 
    Otherwise, if both robots see each other, then we transfer to C1, which also yields aggregation.
    
    (\emph{C3}): In the final case, none of the robots sees the other at time $t$ and thus rotates in place. Consequently, after a finite $t'>t$, at least one of the robots will see the other, which leads to either C1 or C2. To conclude, for any initial state, the resulting trajectory goes through each of the three cases at most once. 
\end{proof}



We note that a time bound can be obtained for the suggested controller by looking at the worst case scenarios for each of the three cases. By summing the individual time bounds for each case we receive a bound for the aggregation of two robots using $u^*$.


\section{Impossibility of Aggregation for $n>2$ robots}\label{sec:n>2}

While we have shown that there exists at least one controller which aggregates for $n=2$ robots, we now prove that no controller aggregates for all $n\geq 2$.

\begin{theorem}\label{thm:n>2}
    For any bimodal controller $u\in \U$, there exists a robot number $n_u\geq 2$ and an initial MR state $x_0\in \X^{n_u}_f$ such that $u$ does not aggregate.  
\end{theorem}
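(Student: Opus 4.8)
The plan is to reduce the problem to a finite case analysis on the two motion primitives of $u$ and, for each case, to exhibit a robot count $n_u$ and an initial configuration whose trajectory is never in $\X_a^{n_u}$. The key geometric fact I would use is that a single robot driven by a fixed control traces one of only three curves — a point, a straight line, or a circle — and that (as in Fig.~\ref{fig:2 robots circle}) two or more robots can be made to traverse such a curve in lockstep without ever entering one another's line of sight. I would also use repeatedly that aggregation concerns the \emph{entire} disc union, so any configuration that stays split into two or more components that never merge is already a valid counterexample.

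First, the cases in which the no-sight primitive (mode $0$) is not a pure in-place rotation. If mode $0$ is stationary ($v_{l,0}=v_{r,0}=0$), place two robots far apart facing radially outward: neither ever senses the other and neither moves, so the union is disconnected for all time. If mode $0$ is a straight line ($v_{l,0}=v_{r,0}\neq0$), place the robots on a line orthogonal to the common heading, spaced more than $2r$ apart: all rays point away from the group, no robot is ever sensed, and the whole formation translates rigidly, remaining disconnected. If mode $0$ is a circle of radius $R>0$, place the robots on that circle oriented tangentially and spaced so their tangent rays miss every disc; they orbit in lockstep, never sense one another, and stay disconnected — this is the generalization of Fig.~\ref{fig:2 robots circle}. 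In all of these $n_u=2$ suffices.

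It remains to treat mode $0 =$ in-place rotation, where the sweeping sensor ray makes mutual sensing unavoidable, so the in-sight primitive (mode $1$) governs the outcome. Several sub-cases are again immediate: if mode $1$ is also an in-place rotation, no robot ever translates, so any disconnected placement is permanent; if mode $1$ is stationary, two robots placed facing each other freeze in mode $1$; and if mode $1$ is a net-backward straight motion, two robots facing each other recede along their common line forever while maintaining sight. The genuinely hard case — and the main obstacle — is mode $0 =$ in-place rotation together with a \emph{forward-attracting} mode $1$ (forward translation or a forward arc), the family exemplified by $u^{*}$, which by Lemma~\ref{lem:2robots} already aggregates for $n=2$; here a counterexample must use $n_u>2$.

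For this case I would turn to symmetry. Starting from a $C_{n}$-symmetric configuration, equivariance of the homogeneous dynamics (the sensor, the controller, and the plastic collisions are all geometric) forces the trajectory to remain $C_{n}$-symmetric, so all robots stay at a common distance $\rho(t)$ from the fixed center of symmetry $O$. The free-space constraint then pins $\rho(t)\ge r/\sin(\pi/n)$ for all $t$, and a short computation shows that the disc union of such a symmetric ring is connected only at the degenerate fully-touching radius $\rho=r/\sin(\pi/n)$. The crux is therefore to choose $n$ and the orientation offset so that the sensing-triggered mode-$1$ charges never drive the ring down to this touching radius — that is, so that $\rho(t)$ stays strictly above $r/\sin(\pi/n)$ for all time (for instance by arranging the charges to be tangential or outward, yielding a perpetual symmetric pursuit-without-capture, or a symmetric jam that leaves gaps). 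Proving that such a configuration exists for every forward-attracting mode-$1$ primitive, and that it provably never attains the connected ring, is the heart of the argument and the step I expect to be most delicate.
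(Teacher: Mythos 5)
Your overall strategy---a finite case analysis over the motion primitives with an explicit non-aggregating configuration for each case---is the same as the paper's, but your decomposition (by the no-sight mode first) and your counterexamples differ, and three of your cases have genuine gaps. The decisive one is the case you yourself flag as the ``heart of the argument'': mode $0=$ rotate-in-place with a forward mode $1$. You leave it unproven, and the symmetric-ring sketch is unlikely to close it: when mode $1$ is straight-forward, each charge is directed along a chord of the ring toward another ring robot, and every such chord has a strictly inward radial component, so the $C_n$-symmetric radius $\rho(t)$ \emph{decreases} under every sensing event; by symmetry all inter-robot gaps are equal, so the natural outcome is a simultaneously touching (hence connected, hence aggregated) ring rather than a perpetual pursuit-without-capture. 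The paper avoids this dynamical analysis entirely by exploiting the plastic-collision assumption: it places robots in pairs that face each other in contact, so each robot permanently sees its partner, permanently attempts to drive forward, and is permanently blocked; two such pairs placed far apart give a disconnected union for all time with $n_u=4$. That blocking mechanism is the key idea your proposal is missing, and it also handles the forward-arc mode $1$ uniformly.

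Two further gaps. First, your treatment of a circular no-sight mode places two robots on the \emph{same} motion circle of radius $R$; diametrically opposite robots are then $2R$ apart, so the construction requires $R>r$ for the discs to be disjoint (and non-aggregated). Controllers with $R\le r$ exist---indeed the paper's own $\tilde{u}_{prev}$ has $\tilde R=3.67\,\mathrm{cm}<r$ precisely so that two robots \emph{cannot} share a circle without meeting---so for those controllers your counterexample is already aggregated at $t=0$ and you need a different construction (the paper again uses blocking: rings of mutually obstructing robots, or facing/back-to-back pairs, none of which depend on $R$). Second, your enumeration of mode-$1$ sub-cases under mode $0=$ rotate-in-place covers in-place rotation, standstill, straight backward, and the two forward primitives, but omits \emph{circular backward}; two robots facing each other do not simply recede along their common line under a backward arc (their headings rotate, sight is lost, and the mode alternates), so that case needs its own argument---the paper handles it with a ring of mutually blocking robots enclosing a lone robot whose backward circle is too small to reach the ring.
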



\begin{proof}
    We categorize the various types of movement that a robot makes, and the corresponding controllers. A bimodal controller is defined by one constant controller (A) when a robot is not in sight and a second constant controller (B) otherwise. All possible mode controllers are described in the following table, with the values $a,b \in (0,1]$, where $a\ne b$, representing normalized wheel controls. 

    {\small
    \begin{center}
    \begin{tabular}{| c | c |}
        \hline
             Movement type & Control input \\
            \hline\hline
            Straight Forwards (SF) & ($a$, $a$) \\
            \hline
            Straight Backwards (SB) & ($-a$, $-a$) \\
            \hline
            Circular Forwards (CF) & ($a$, $ b$) \\
             \hline
            Circular Backwards (CB) & ($-a$, $ -b$) \\
             \hline
            Rotate on Spot (RS) & ($\pm a$, $\mp a$)\\
             \hline
            Stand Still (SS) & ($0$, $0$) \\
             \hline
        \end{tabular}
    \end{center}}
        
We have 36 different possible categories of bimodal controllers. We identify a specific category by concatenating the labels of the A controller and the B controller. E.g., the label SF-CB represents bimodal controllers moving straight forward without a robot in sight, and circularly backward otherwise. XX denotes any of the controllers for A or B. 
    
    Next, we consider all bimodal controller categories, for which we provide counterexamples (numbers in the parentheses indicate the additional controllers types eliminated).
    
    \niceparagraph{CB-RS (1).} For this type of controller, deadlocking scenarios have been shown to guarantee nonaggregation~\cite{daymude2021deadlock} for any $n_u\geq 4$, as we discussed in Sec.~\ref{sec:intro}. Although no-slippage is one of the assumptions we make, which allows to guarantee that deadlocking persists, we provide a slightly more complex counterexample robust to slippage (see Sec.~\ref{sec:sim}). A ring of robots, facing outwards radially, are placed in a ring centered on the origin, while one robot is placed on the origin (see Fig. \ref{fig:bot in the middle}). The outside robots attempt to move backwards, as another robot is not currently in sight, but are blocked by their neighbors and thus cannot move. The robot situated on the origin always has another robot in its LoS, and thus continues to rotate on the spot. As such, we can guarantee that aggregation will not occur.

    \niceparagraph{SS-SS, SS-XX, XX-SS (11).} Any controller, including a stationary controller in either control scheme, trivially, cannot promise aggregation, even for $n_u=2$ robots. We place the robots such that they are all looking at one another (for the XX-SS scenario) or that none of the robots see another (for SS-XX), ensuring the robots will be frozen in place.
    
    \niceparagraph{XX-SF, XX-CF (10).}
    We reuse the deadlocking scenario, but this time the robots in each pair face each other. This scenario promises nonaggregation independent of control scheme A, as the robots cannot move from their initial, nonaggregated, positions. 
    
    
    \niceparagraph{XX-SB  (5).}
    If control scheme B is purely backwards, nonaggregation can be promised by having two robots face each other initially. As the LoS is infinite, both robots will drive backwards eternally, regardless of control scheme A. We can continue adding more pairs of robots situated horizontally to existing pairs. As the robots move on the same line, their LoS always includes another robot, promising that they continue their movement and never aggregate.
    
    
    
    
    \niceparagraph{XX-CB  (5).}
    We expand on the CB-RS counterexample, and develop a counterexample where $n-1$ robots are oriented in a ring, with another robot in the middle. The robots in the ring are positioned such that each robot sees its neighbor to the right, and due to their arrangement they maintain their states, and consequently the ring structure, throughout the controller execution (see Fig. \ref{fig:bot in the middle forwards}). Thus, the robot in the middle always maintains visibility with one of the robots encircling it. Now, if the size of the ring is small enough with respect to the radius $R$ of the circle of motion induced by the CB controller, the lone robot could eventually reach the ring robots, which would result in an aggregation. Thus, we specify the ring of robots to be large enough such that no aggregation occurs. In particular, for a given radius $R$ we can determine geometrically a minimal number of robots to form a bounding ring.

    \begin{figure}
        \centering
        \subfloat[\label{fig:deadlocking}]{
                \frame{\includegraphics[width=0.32\linewidth]{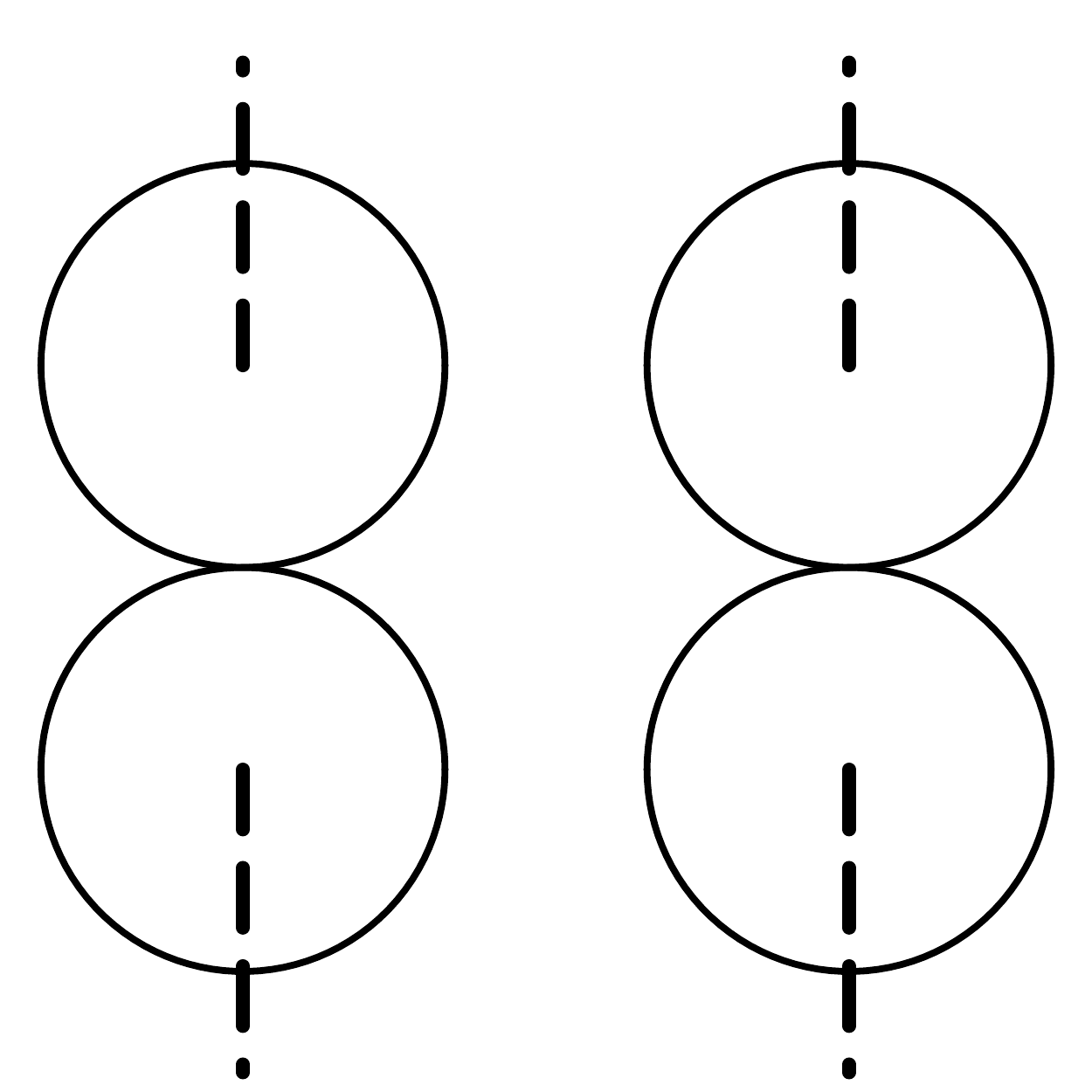}}}
        \subfloat[\label{fig:bot in the middle}]{
                \frame{\includegraphics[width=0.32\linewidth]{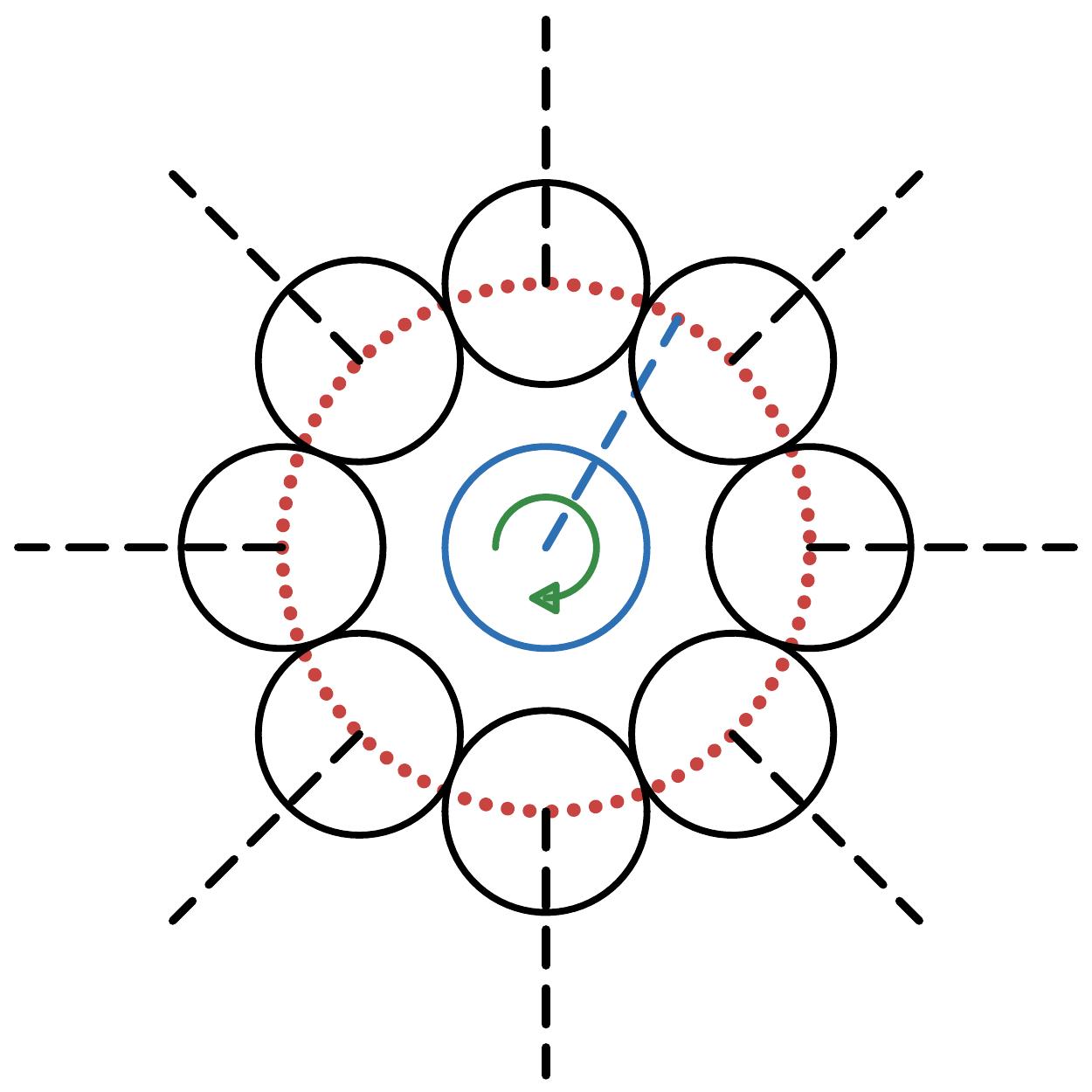}}}
        \subfloat[\label{fig:bot in the middle forwards}]{
                \frame{\includegraphics[width=0.32\linewidth]{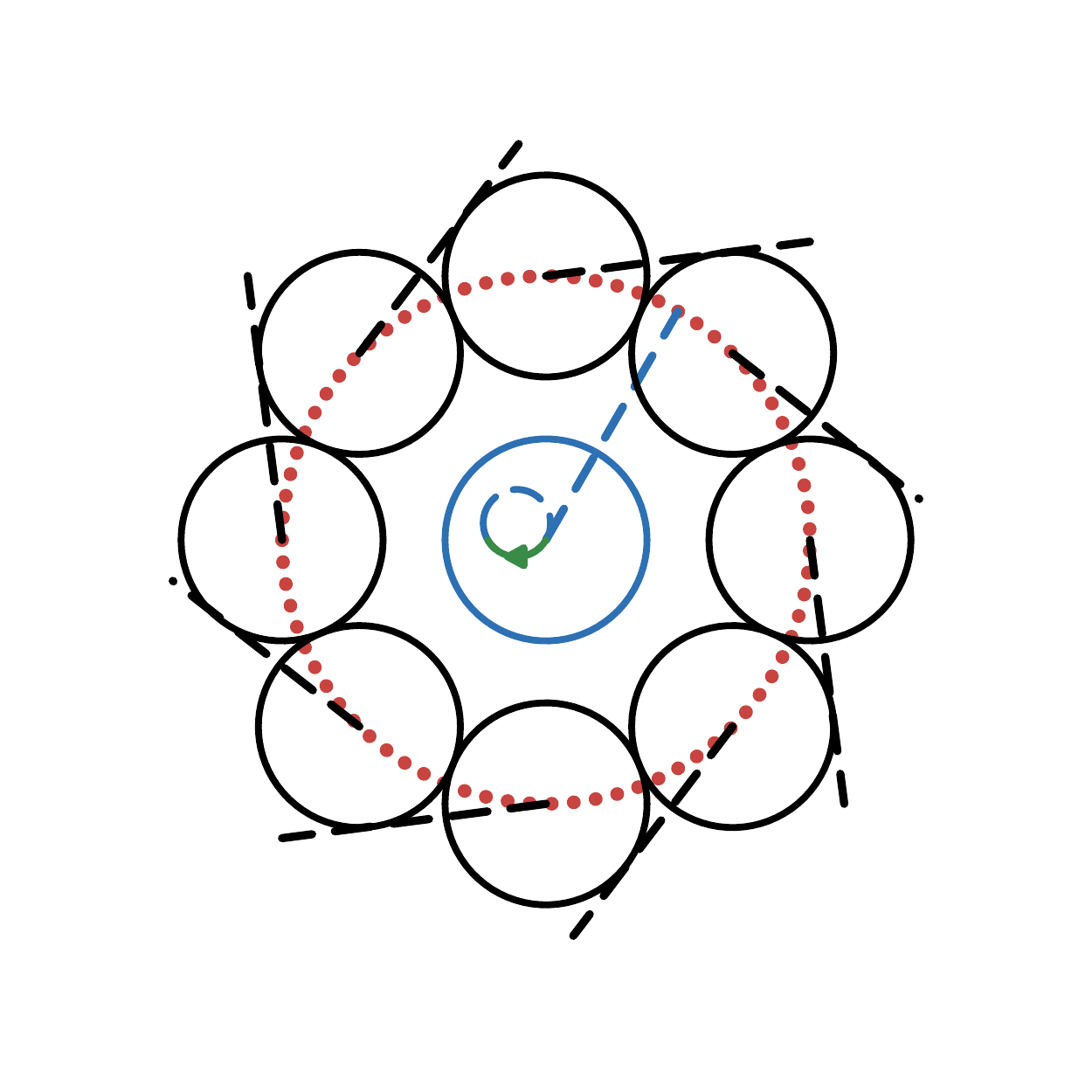}}}
        \caption{Counterexamples for bimodal controllers. (a) Based on~\cite{daymude2021deadlock}, each robot equipped with an $\{\text{SB,CB}\}$-XX has no other robot in sight and is blocked by another robot from behind. (b) A different counterexample for a CB-RS controller, where  the center robot is stuck continuously rotating. (c) A XX-CB controller causes all peripheral robots to be stuck, as they attempt to move backwards but are blocked by their neighbor. 
        }
        \label{fig:Theorem 1 graphs}
    \end{figure}

    \niceparagraph{XX-RS  (4).} An RS-RS controller is trivially nonaggregating. For $\{\text{SB,CB}\}$-RS, we have shown for CB-RS controllers (Fig.~\ref{fig:bot in the middle}) that when the ring robots are initially orientated outwards radially, these robots are blocked by their neighbors. Finally, for a $\{\text{SF,CF}\}$-RS controller we can place each robot in an orientation very similar to the XX-CB controller case, but this time the robots just barely miss each other and do not shift to control B. As they attempt to move forwards they are blocked by the neighbor in front, in the same manner that an XX-CB controller is blocked from behind. 
\end{proof}


\section{Nonsingularity of nonaggregating states}\label{sec:N deadlock}
The counterexamples for nonaggregation of $n>2$ robots so far require a particular configuration of the initial robot states. Applying random perturbations, which slightly alter the robots' positions and orientations, might result in aggregation. In this section we consider the question of whether all nonaggregating initial states are singular states, that is, the probability of choosing those states is equal to zero, when the robots' states are sampled uniformly at random from a finite domain. We first answer this question in the negative for a specific subset of the controller group CB-XX which includes the controller $u_{prev}$.


\begin{theorem}
    \label{thrm:N deadlock}
    Consider a \textup{CB-XX} controller $u\in \U$ such that control A induces a circular motion of radius $R>1.707r$. Fix an even number of robots $n\geq 4$. Then the probability of nonaggregation of $\pi_{u,x_0}$ over the uniform random 
    choice of the initial MR state $x_0$ from a large enough workspace is strictly positive. 
\end{theorem}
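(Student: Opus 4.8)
The plan is to prove positive probability by exhibiting a nonempty \emph{open} set $\mathcal{O}\subseteq \X^n_f$ of initial states, every one of which is nonaggregating, and which is contained in the sampling domain. Since $x_0$ is drawn uniformly from a bounded workspace, any open set of positive Lebesgue measure receives strictly positive probability; hence it suffices to construct such an $\mathcal{O}$ and to take the workspace large enough to contain it, which gives $\Pr[\text{nonaggregation}]\ge \vol(\mathcal{O})/\vol(\text{workspace})>0$. This reduces the probabilistic claim to a purely geometric robustness statement: a single nonaggregating configuration can be perturbed in all $3n$ coordinates while preserving nonaggregation for all time.

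For the nominal configuration I would adapt the deadlock of Theorem~\ref{thm:n>2}, specialized to mode A $=$ \textup{CB} and to even $n$. Partition the $n$ robots into $n/2$ \emph{pairs}; place the two robots of each pair in contact and facing directly away from one another (back-to-back), and place the $n/2$ pairs far apart. In the nominal state every robot has no robot in its line of sight, so it executes \textup{CB}; its commanded backward motion points straight into its partner, and by the plastic-collision and no-slip assumptions (a robot whose commanded rigid motion would penetrate a neighbor cannot move, and its stalled wheels cannot slip) both robots of a pair are frozen. Thus the nominal state is a static fixed point, and since distinct pairs are far apart, $\bigcup_i \D_r(x_i)$ has exactly $n/2\ge 2$ connected components, so the state lies outside $\X^n_a$ and stays there forever.

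The crux, and the step I expect to be the main obstacle, is to show that this fixed point is \emph{robust} under a small perturbation of every position and heading. For a single perturbed pair there are two open conditions to maintain throughout the trajectory: (i) neither robot ever sees the other, so both remain in mode \textup{CB} --- this follows because ``facing away'' is an open condition on the headings, under which the forward sensing ray misses the partner's disc; and (ii) each robot's backward circular motion keeps pressing it into its partner rather than curling around it. Condition (ii) is exactly where the hypothesis $R>1.707\,r$ enters: after a perturbation the two robots may be slightly separated, and a robot backing along an arc of radius $R$ must still strike its partner's disc and come to rest in a blocked pose rather than veering past it. A short geometric computation should show that the arc is gentle enough to guarantee this precisely when $R$ exceeds $(1+\tfrac{1}{\sqrt{2}})\,r\approx 1.707\,r$; the sharper the curvature (smaller $R$), the more a robot can curl around its partner and escape. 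Verifying that (i) and (ii) hold along the \emph{entire} forward trajectory --- including that any transient sliding terminates in a frozen pose, not an escape --- is the technical heart of the argument.

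Finally I would decouple the pairs and assemble $\mathcal{O}$. Since a robot in mode \textup{CB} travels on a circle of radius $R$ (or is frozen), its reachable set lies in a disc of radius $O(R+r)$ about its start; choosing the inter-pair spacing larger than this bound guarantees that no robot of one pair ever enters the line of sight of, or comes within aggregation distance of, a robot of another pair, so the $n/2$ components never merge. Each of the finitely many conditions invoked --- strict separation between pairs, ``facing away'' for each robot, and the strict blocking inequality of condition (ii) --- is an open condition on $x_0$, so their conjunction defines a nonempty open set $\mathcal{O}$ of nonaggregating states. Enlarging the workspace to contain $\mathcal{O}$ then yields a strictly positive probability of nonaggregation, proving the theorem.
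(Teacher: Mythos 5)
Your high-level reduction --- exhibit a nonempty open (positive-measure) set of nonaggregating initial states and note that a uniform sample from a bounded workspace hits it with positive probability --- is exactly the paper's strategy, and pairwise deadlocks are the right shape of counterexample. But there is a genuine gap in how you decouple the pairs. You argue that because a robot in mode CB stays within a disc of radius $O(R+r)$ of its start, spacing the pairs far apart ``guarantees that no robot of one pair ever enters the line of sight of a robot of another pair.'' That does not follow: the sensor tests an \emph{infinite} ray, so physical separation, however large, does not prevent a robot from seeing a member of another pair. If that happens the robot switches to mode B, which in a CB-XX controller is arbitrary, and the whole deadlock argument collapses. Controlling inter-pair visibility is precisely the hard part of the paper's proof: the two robots of a pair are started \emph{separated} (an eighth of an arc back from their meeting point, on circles of radius $R$ centered at $(\pm R,0)$), so that before deadlocking each robot's heading sweeps only a bounded angular sector (at most a quarter-plane after perturbation), and successive pairs are then placed along a diagonal inside each other's blind sectors; the perturbation bound $\eps^*$ is chosen exactly to preserve both the eventual overlap and these visibility cones. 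Your back-to-back nominal configuration could in principle be patched similarly (the headings barely rotate before re-contact, so each ray stays in a thin cone, and pairs can be placed off each other's cones), but that argument is absent from your proposal and it is the crux, not a detail.

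Two smaller points. First, your account of where $R>1.707r$ enters is not right: in your construction any $R>0$ would do, since the perturbation can be taken small relative to $R$ and the deviation of an arc from its tangent over a gap of size $\eps$ is $O(\eps^2/R)$. In the paper the bound $R>r/(2-\sqrt{2})\approx 1.707r$ is what makes the \emph{separated} eighth-of-arc-back initial placement collision-free at time $0$. Second, you correctly flag but do not resolve whether a perturbed contact is a permanent deadlock rather than a sliding escape; the paper handles this with an overlap argument (the unconstrained trajectories would overlap near the origin at time $t$, so a first contact with both robots converging head-on occurred at some $t'<t$), which is another reason the converging-arcs configuration, rather than a perturbed back-to-back contact, is used.
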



\begin{proof}
    We are looking for a family of MR states $x_0\in\X^n_f$, such that the robots are grouped into pairs, and the following conditions are satisfied: (i) No pair of robots will see another pair at any time.
        (ii) No pair of robots will be seen by another pair at any time.
        (iii) All pairs will end up individually deadlocked.
    
    To answer the first item, we wish to find an initial configuration that ensures no robot will see more than a quarter of its visible plane during its motion. As every pair of robots must arrive in a deadlock, we start by looking at the possibilities of deadlocking the first two robots $0$ and $1$. Consider an initial scenario in which the ICR of robots $0$ and $1$ are situated at $(-R,0)$, $(R,0)$, respectively. If placed at opposite initial orientations 
    and ignoring collisions, the two robots reach the origin at time $t$. At this time, the distance between each robot and the origin is zero, and the overlap between the two is their entire area---a circle with a radius of $r$. As we are excluding collisions, the two robots will stop before reaching this point, but this does demonstrate that if at time $t$ there exists an overlap between the two robots (if ignoring collisions), then a deadlock must have occurred at time $t'<t$.

    To ensure that no more than a eighth of the plane is visible during the motion between time $0$ to $t$, we set the initial position of robot $0$ to $(x_0,y_0) = \left(-R\left(1-\frac{1}{\sqrt{2}}\right),\frac{R}{\sqrt{2}}\right)$, or an eighth of an arc backwards from the point of deadlocking (see Fig. \ref{fig:N deadlock}). Specifically, robot $0$ can see points $(x,y)$ that satisfy $\{\max{(x,-x-y)}\le0\}$. As robot $1$ is set to the same arc, its position is simply $(x_1,y_1)=-(x_0,y_0)$. Similarly, this robot sees the set $\{\max{(-x,y+x)}\le0\}$ from time $0$ to $t.$ The radii $R$ which enable this are all $R>\frac{r}{2-\sqrt{2}}>1.707r$, as the distance between the initial robots $0$ and $1$ states must be at least $2r$. Using this, the starting orientations of robots $0$ and $1$ are $\theta_0:=\frac{3\pi}{4}$ and $\theta_1:=-\frac{\pi}{4}$, respectively and with respect to the positive x-axis.

    Next, we define a maximal perturbation $\eps^*=(\eps^*_x,\eps^*_y,\eps^*_\theta)$ applied to the initial states of robots 0 and 1, which will be subsequently also applied to the initial states of the following robots. In particular, robot $i$'s initial configuration is updated to be 
    $\bm{x}_i:=(x_i,y_i,\theta_i)+(\eps_{i,x},\eps_{i,y},\eps_{i,\theta})$, where the perturbations are chosen $\eps_{i,x}\in (-\eps^*_x,\eps^*_x),\eps_{i,y}\in (-\eps^*_y,\eps^*_y),\eps_{i,\theta}\in (-\eps^*_\theta,\eps^*_\theta)$ uniformly at random, and independently between the robots. We choose the values $\eps^*$ such that (1) each robot arrives at time $t$ to a position that is at distance at most $r/2$ from the origin (while ignoring collisions), (2), along their motion robot $0$ and $1$ see at most the points $(x,y)\in \dR^2$ that satisfy $\{\max{\left(\tan{\left(\frac{7\pi}{8}\right)}(x+2r), \tan\left(\frac{3\pi}{8}\right)(x-r)\right)}-y\le0\}$ and $\{y+\max\left(-\tan\left(-\frac{\pi}{8}\right)\left(x-2r\right),-\tan\left(\frac{3\pi}{8}\right)\left(x+r\right)\right)\le0\}$, respectively (see Fig. \ref{fig:N deadlock}). In particular, by fixing 
    \begin{align*}
        \eps^*=\left(\frac{r}{8},\frac{r}{8}, \min{\left\{\frac{\pi}{8}, \arccos{\left(1 -  \frac{64r^2 - r^2}{64R^2\left(2 - \sqrt{2}\right)}\right)}  \right\}}\right),
    \end{align*} 
    those conditions are satisfied. This ensures that the robots are still in collision at time $t$, which implies that they stopped after meeting each other at time $0<t''<t$. Moreover, this would allow us to position additional robots in the areas not visible by the first two robots, and reuse the same arguments. 


    \begin{figure}
        \centering
        \subfloat{
                \frame{\includegraphics[width=0.47\linewidth]{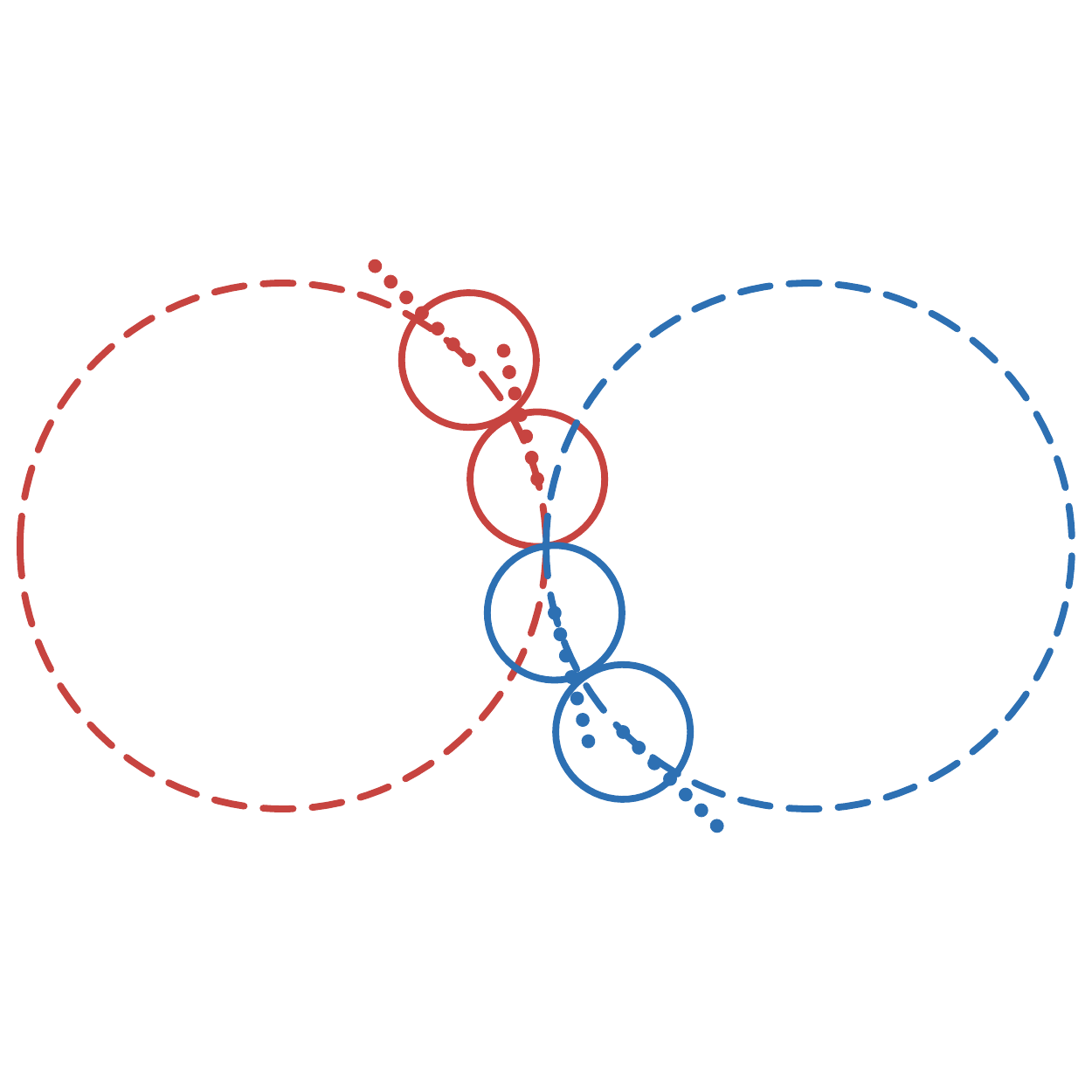}}}
        \subfloat{
                \frame{\includegraphics[width=0.47\linewidth]{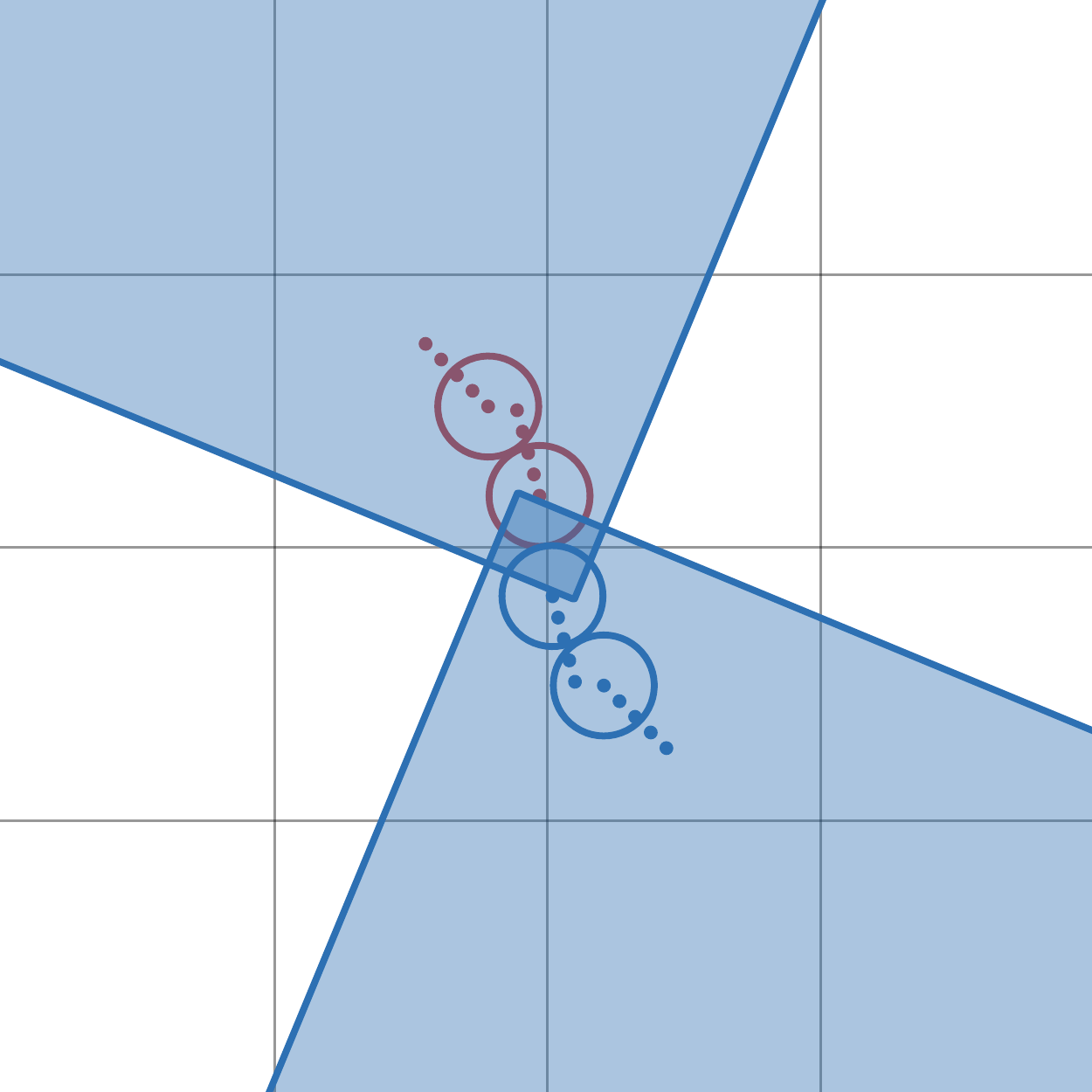}}}
        \caption{Motion of the first pair of robots in Theorem~\ref{thrm:N deadlock}. [left] The robots meet at a deadlock at the origin if placed without perturbations. [right] With added perturbations, the two robots see no more than the highlighted area.}
        \label{fig:N deadlock}
    \end{figure}
    
    While we omit the technical details for obtaining $\eps^*$ we provide some intuition. We choose $\eps^*_x$ and $\eps^*_y$ such that the location of each robot is at a distance of at most $r/4$ from the origin at time $t$, assuming that $\eps^*_\theta =0$. Next, we increase $\eps^*_\theta$ such that the distance from the origin increases from $r/4$ to at most $r/2$.  Here we note that the additional translation to the position in time $t$ by setting $\eps^*_\theta$ to be nonzero is expressed by a series of rotation and translation matrices, given by $T\left(\bm{x}_i\right)R(\varepsilon^*_\theta)T\left(-\bm{x}_i\right)\bm{d}$, where $T(\bm{v})$ are spatial translations dictated by the vector $\bm{v}$, $R(\phi)$ are anti-clockwise rotations of the origin by an angle of $\phi$, $\bm{x}_i$ is the unperturbed initial position of robot $i$, and $\bm{d}=(0,0,1)^T$ is the origin.

    The change in the orientation of the robots due to the introduction of $\eps^*_\theta$ is obtained by padding range of orientations without perturbation with $\eps^*_\theta$. Thus, each robot covers a portion of orientations which is at most $\frac{\pi}{4} + 2\eps^*_\theta$. As we want each robot to see no more than a quarter of the plane, this value must be bound by $\frac{\pi}{2}$, so that $2\eps^*_\theta < \frac{\pi}{4}$.

    We now proceed to position robots $2$ and $3$ in a similar manner. For simplicity, we rotate the entire plane anti-clockwise around the origin by an angle of $\eps^*_\theta$, to ensure that the visibility of robots $0$ and $1$ is limited to the upper-left and bottom-right quarters respectively (see Fig. \ref{fig:N deadlock}). We  place the next two robots in the same manner as the first pair, by choosing a 'shifted origin' $(s_o,s_o)$    
    for their overlap, situating the ICR of the robots at $(-R+s_o,s_o)$ and $(R+s_o,s_o)$, respectively, and choosing their initial positions at an eighth of an arc backwards on their circles with an addition of perturbation sampled according to $\eps^*$, and by an additional angle of $\eps^*_\theta$ so as to coincide with the rotated plane. By choosing the value $s_o$ to be large enough (but bounded), we can ensure that none of the robots $0$ and $1$ crosses to the visibility region of robots $2$ and $3$, and vice versa. Moreover, for large enough $s_o$ no collision between a robot $i\in \{0,1\}$ with a robot $j\in \{2,3\}$ would occur. 

    This construction can be repeated for any number of robot pairs, as we ensure that the next pair can be positioned in an unseen quarter of a plane, assuming that the workspace $S$ is large enough.    
    Thus, the probability of sampling a nonaggregating MR state is at least $\left(\frac{2\pi r/4}{|S|}\cdot \frac{\eps_\theta^*}{2\pi}\right)^n$, where $|S|$ is the volume of $S$. 
\end{proof}

\niceparagraph{Discussion. }
We believe that similar proofs can be obtained for other types of controllers. For instance, for the  XX-SS controller (and similarly for SS-XX), one can perturb the counterexample presented in Sec.~\ref{sec:n>2} and still ensure that each robot sees another, and so the robots stand still without aggregating. Similarly, perturbing the counterexamples for XX-SF or XX-CF leads to nonaggregation.

In contrast, for XX-SB our counterexample from Sec.~\ref{sec:n>2} breaks when perturbations are included, as pairs of robots would lose their lines of sight, unless their headings are aligned. Counterexamples that involve a ring structure, such as for XX-CB and  XX-RS require more careful examination, which we leave for future work. We do consider empirically the effect of perturbations on the ring structure for the CB-RS controller in Sec.~\ref{sec:sim}.

\begin{figure}
    \centering
\frame{\includegraphics[width=1\linewidth]{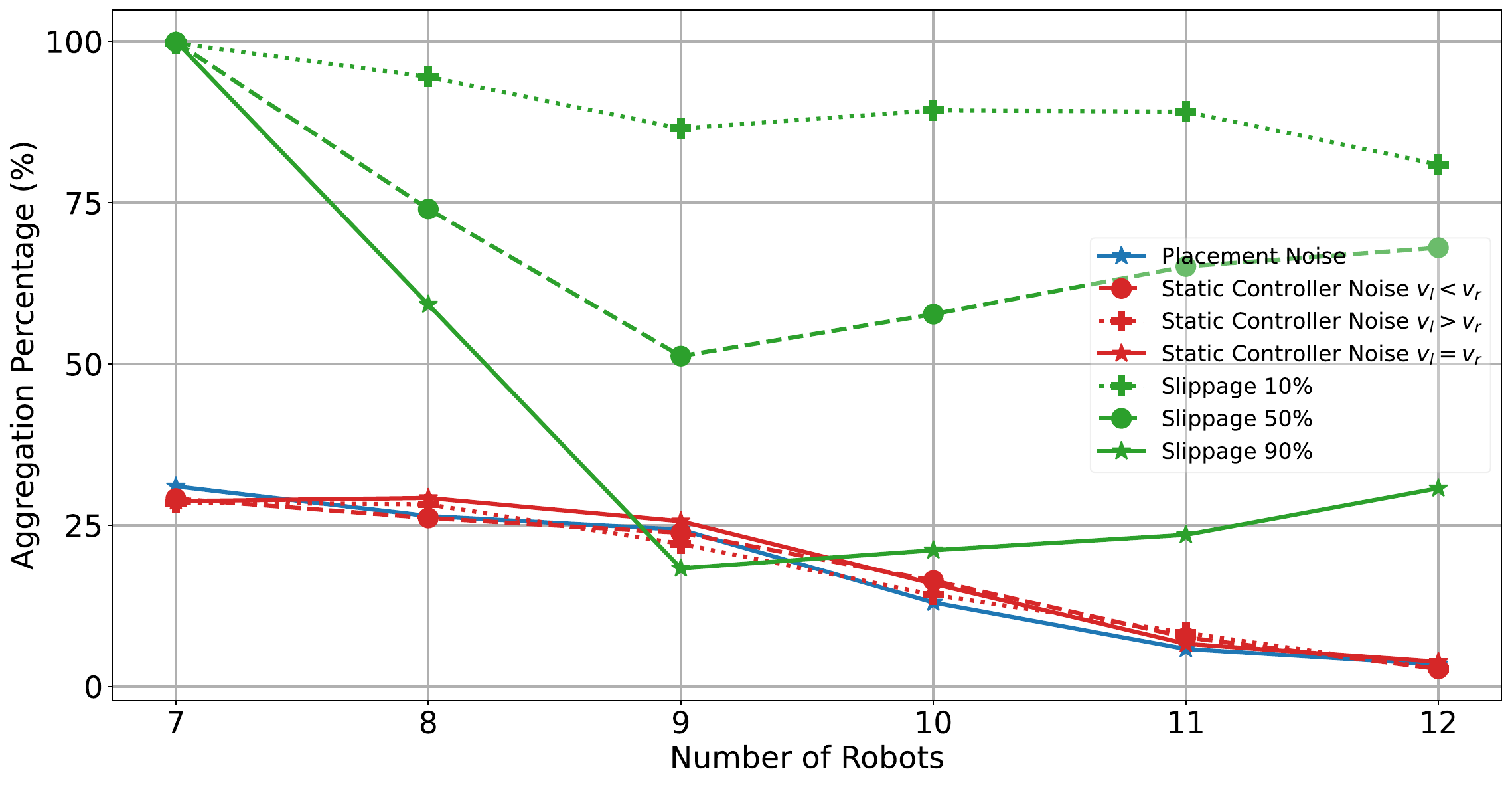}}
    \caption{A plot demonstrating the effect of various noise sources and slippage on the aggregation percentage in the bot-in-the-middle scenario.}
    \label{fig:sim agg percent}
\end{figure}

\section{Experimental Results}\label{sec:sim}
We present experimental results. We first compare our new controller $u^*$ for $n=2$ with $u_{prev}$ and $\tilde{u}_{prev}$. Next, we consider our 'bot-in-the-middle' counterexample for the XX-CB controller and test its resilience to perturbations, slippage, and control noise. The evaluation was performed using our Python environment where DD robot kinematics and slippage dynamics are simulated. A padding value $\rho$ of $1/20$ of the robot's radius was used throughout.  

\subsection{Aggregation for $n=2$}\label{sec:sim n=2}
In Sec.~\ref{sec:n=2} we suggested the controller $\tilde{u}_{prev}$ to remedy issues with $u_{prev}$ from~\cite{gauci2014self}. Here we compare its performance against our provably-aggregating $u^*$, as well as with the faulty $u_{prev}$.
We simulate the dynamics of two robots, with their initial states sampled uniformly within a square of dimensions $\frac{200}{\sqrt{2}}\times\frac{200}{\sqrt{2}} \left[\text{cm}^2\right]$, where there are  $10^4$ initial MR states in total. As expected due to Lemma~\ref{lem:2robots}, the robots aggregated for all initial states using the controller $u^*$. Interestingly, $\tilde{u}_{prev}$ achieved a $100\%$ success rate as well, which suggests that it could be provably aggregating. In contrast, $u_{prev}$ failed to aggregate on $4.24\%$ of the tests (either due to the robots performing periodic motions after a certain period of time, or exceeding a time budget of $5\times 10^3$ seconds). 

\subsection{Nonaggregation for $n>2$ robots}\label{sec:sim n>2}
Our theoretical analysis in Theorem~\ref{thm:n>2}  assumes that no slippage or control noise are present. However, we conjecture that this result can be generalized even when those factors are introduced, at least for some forms of bimodal controllers. Here, we test this claim empirically specifically for a XX-CB controller and the bot-in-the-middle counterexample. 

Before we proceed to tackle those questions, we first test whether this setting is also resilient to perturbations. We show that for small perturbations a substantial number of the initial states will not aggregate for the $\tilde{u}_{prev}$ controller. First, assuming no controller noise, we simulate a ring with $6$ to $11$ robots, with an additional robot in the middle of the ring. From this initial configuration we first move each peripheral robot $\frac{r}{2}$ radially outwards, and then we perturb each robot spatially by $\pm\frac{r}{4}$ in both $x$ and $y$ direction, and its orientation by $\pm\frac{\pi}{32}$. The center robot is perturbed similarly with respect to the origin. 

Next, we describe our tests for control noise and slippage. We consider a 'static' controller noise, in which the speed of every wheel individually is slightly changed according to a random number sampled in a 'high' ($0.02$) or 'low' ($0.01$) setting. Three different scenarios were evaluated: (i) The left wheel was given a high amount of noise, while the right wheel was given a low amount, (ii) the opposite of the previous scenario with respect to each wheel, and (iii) both wheels were given a high amount of noise. Additionally, we integrate slippage by simulating conservation of momentum, and adding a variable controlling the percentage of the momentum being conserved, with $0\%$ being purely plastic collisions, and $100\%$ being purely elastic collisions. For slippage also, three scenarios were evaluated, with percentages of $\{10, 50, 90\}\%$ being tested.




Fig. \ref{fig:sim agg percent} shows the probability of aggregation for each of the settings mentioned above, where for each robot number $10^3$ scenarios (or runs) were generated. Most importantly, for all scenarios aggregation was not promised, with the percentage of aggregating scenarios generally falling  with the increase in the number of robots. Interestingly, as we approach purely elastic collisions, we see that less scenarios aggregate. We believe this is due to the rings forming more evenly when slippage is allowed, and thus the center robot is fully enclosed inside the ring faster and more often in these scenarios.

\section{Future Work}
Our work mathematically proves that constructing an aggregating controller for a large number of robots is an impossible task. Moreover, our empirical work, as well as that in previous work~\cite{daymude2021deadlock}, suggest that the addition of more accurate robot dynamics (e.g., slippage) and noise do not improve the situation. 
This implies that stronger robot capabilities are necessary to achieve the goal. In the future, we wish to explore the opposite direction, i.e., understanding the robot capabilities necessary to execute a given task.  Towards this end, we plan to explore the theory of knowledge in distributed systems~\cite{Moses16}, which, in our context, characterizes the information that should be known to the robots to execute a task. This may help determining both the type of sensors needed and the controller structure for a given task.

\section*{Acknowledgements}
The authors thank Idit Keidar, Yoram Moses, and Itai Panasoff for fruitful discussions, and to Avishav Engle for proofreading.

\bibliographystyle{IEEEtran}

\bibliography{bibtex/bib/refs.bib}

\end{document}